%% file: neurips_2026.tex
\documentclass{article}

     \PassOptionsToPackage{numbers, compress}{natbib}

\usepackage[preprint]{neurips_2026}

\usepackage[utf8]{inputenc} 
\usepackage[T1]{fontenc}    
\usepackage{hyperref}       
\usepackage{url}            
\usepackage{booktabs}       
\usepackage{amsfonts}       
\usepackage{nicefrac}       
\usepackage{microtype}      
\usepackage{xcolor}         

\usepackage{amsmath, amsthm, amssymb}
\usepackage{graphicx}
\usepackage{tikz}
\usetikzlibrary{positioning,chains,fit,shapes,calc,arrows.meta}
\usepackage{textcomp}
\usepackage{url}
\usepackage{pgfplots,wrapfig}
\pgfplotsset{width=10cm,compat=1.9}
\newtheorem{prop}{Proposition}

\title{Dual-Primal Graph VAEs for Noisy Label Aggregation}

\author{%
  Patrick Stinson\\
Zuckerman Institute\\
  Columbia University\\
  \And
  Nikolaus Kriegeskorte\\
  Zuckerman Institute\\
  Columbia University\\
}

\begin{document}

\maketitle

\begin{abstract}
\input{chapters/abstract}
\end{abstract}
\input{chapters/intro}
\input{chapters/preliminaries}
\input{chapters/related}

\input{chapters/method}
\input{chapters/experiments}
\input{chapters/discussion}
\newpage
\bibliography{refs.bib}

\newpage
\appendix

\section{Appendix}
\renewcommand{\thefigure}{A\arabic{figure}}
\setcounter{figure}{0}
\input{chapters/identifiability_proof}
\input{chapters/figs/mi_fig}

\end{document}

%% file: chapters/abstract.tex
Inferring the ground-truth from noisy crowdsourced labels is an important theoretical and practical problem. Neural network-based methods offer an alternative to classical Bayesian models which require specifying a family of generative models used for inference. However, current models either still rely on fairly simple generative models for inference or require pseudo-labels or synthetic data to train the aggregate classifier. We propose a graph VAE architecture in which the decoder and encoder use GAT-based message passing on the adjacency graph of a crowdsourced dataset and its dual, respectively. The ground-truth labels are treated as latent variables, enabling unsupervised representation learning without needing to train a separate classifier. We show our model achieves state of the art performance on crowdsourcing benchmarks. We then demonstrate the generality of our approach by showing how the original crowdsourcing graph can be augmented to incorporate side information such as representations from neural network classifiers trained on the noisy labels to substantially boost their classification performance at test time.

%% file: chapters/intro.tex
\section{Introduction}
\label{sec:intro}
Aggregating the estimates from imperfect classifiers into an overall ``meta-classification" is a topic rooted in both statistical learning theory \cite{schapire90strength,freund96experiments} as well as more colloquial statistics, as illustrated by the ``Wisdom of the Crowd" phenomenon famously observed by Galton \cite{galton07vox}. Today, crowdsourcing is a common tool used in a range of applications from neural connectome mapping \cite{jinseop14space} to forecasting world events \cite{tetlock17expert}.

If the members of the crowd are elicited multiple times for different estimation tasks, simple estimates from the crowd response, such as the majority vote (MV) can be improved upon by parameterizing individual workers' response patterns and modifying the overall estimate based on these parameterizations \cite{dawid79maximum}. Increasingly complex hierarchical Bayesian models \cite{kim12bayesian,li19exploiting,stinson25nonparametric} can be built to model the population response; however, a key limitation is that these generative models must be handcrafted.

Latent variable models parameterized by deep neural networks (DNNs) \cite{kingma13auto,rezende14stochastic,mnih14neural} eliminates the need for handcrafting. However, crowdsourcing methods based on generative modeling \cite{yin17aggregating} have had to use only simple transformations to map from the observed crowd response to latent variables and vice versa.

Crowdsourcing data can be represented as a bipartite graph with edges between worker nodes and task nodes representing noisy labels. Recent work has used representation learning on graphs to parameterize worker-task interactions \cite{liu24graph,wu23crowdsourcing}; however, since they lack explicit generative and inference models, a separate classifier must be learned to map the learned representations to ground-truth label estimates. Since the ground-truth labels are not known, pseudo-labels, proxies to the ground-truth labels such as MV, must be used instead and can be of low quality, giving rise to a sub-par classifier.

We propose overcoming the shortcomings of current latent variable models and representation-based models by integrating representation learning in a graph autoencoder \cite{kipf16variational} architecture. Specifically, we use a graph attention network (GAT) \cite{velickovic18graph} to specify a generative model (decoder) for the edges given task latent encodings. The encoder model is a GAT on the dual of the graph in which edge representations are transformed to latent encoding distributions.

In addition, we show how our approach generalizes to integrating disparate sources of information on a graph to do node inference. Specifically, we augment the crowdsourcing graph with a set of nodes corresponding to image representations from DNNs trained on noisy labels and use our method for inference, giving rise to substantially more accurate estimates than those from either source alone.

%% file: chapters/preliminaries.tex
\section{Preliminaries}
We are given a set of $N$ labeling tasks, where each task indexed by has an unobserved ground-truth value $t^*_n\in[L]$.  $M$ workers each annotate a subset of these tasks, designated as $\mathcal{N}(m)$ for the $m$th worker. The full crowdsourcing dataset can be considered as a bipartite graph consisting of a set of worker nodes indexed by $m\in[M]$ and a set of task nodes indexed by $n\in[N]$ with an edge between worker node $m$ and task node $n$ if worker $m$ labeled task $n$. The value of the edge $E[(m,n)]$ is the (potentially noisy) annotation worker $m$ provided for task $n$. The goal is to estimate the ground-truth task labels $t^*=\{t^*_n\}_{n=1}^N$ from the worker annotations.

Our strategy is to model the noisy worker annotations as observable projections of the unobserved ground-truth label, which we treat as a latent variable. To model task-worker interactions we use GAT message passing on the crowdsourcing graph to model the observed edges (annotations) given values for the ground-truth labels which we treat as latent variables given by $t=\{t_n\}_{n=1}^N$. Inference of the latent ground-truth labels is done using GAT message passing on the dual of the crowdsourcing graph. The GATs on the dual and primal graphs are used as encoder and decoder models, respectively, to form a VAE.

%% file: chapters/related.tex
\section{Related Work}
\subsection{Fully Bayesian approaches}
Bayesian classifier combination \cite{kim12bayesian} generalizes the classical approach from Dawid-Skene \cite{dawid79maximum} by assuming a generative model of noisy labeling, which is either parameterized by individual workers' confusion matrices (called iBCC for independent) or parameterized by the joint labeling response (called dBCC for dependent). However, the dependent model is generally intractable for datasets involving more than a few workers.

Enhanced Bayesian Classifier Combination (EBCC) \cite{li19exploiting} makes modeling worker dependencies tractable through the use of tensor rank decompositions \cite{hitchcock27expression} to approximate the joint distribution of the population labeling probabilities.
\subsection{Latent encoding-based approaches}
Label-aware autoencoders (LAA) \cite{yin17aggregating} are perhaps most conceptually similar to our approach as they also use a VAE \cite{kingma13auto} architecture where (part of) the latent space represents the ground truth labels. However, the encoder consists of direct transformations from the space of the entire crowd's response (with missing data represented with zeros) to the latent space, and the encoder is another direct transformation from latent space to the population response space. Modeling the transformation directly means that the transformation must be relatively simple (e.g., affine) to ensure the interpretability of the latent space. Such simple transformations may lack the expressivity needed to model the underlying structure of the dataset.

A similar technique is used in \cite{rodrigues13learning}, where in contrast to crowdsourcing benchmarks, task-level information such as the image data from an image classification task is included. A DNN image classifier is used, but the bottleneck layer represents the logits of the ground-truth labels. Like in LAA, an affine transformation maps the latent space to the population response space.

\subsection{Representation learning}
GOVERN \cite{liu24graph} uses data augmentation to create a constrastive loss function \cite{chen20simple,you20graph} using representations formed by a graph convolutional network (GCN) \cite{henaff15deep}. However, since the ground-truth labels are not explicitly represented, a separate classifier must be trained using pseudo-labels.

TiReMGE \cite{wu23crowdsourcing} uses a multi-view graph representation \cite{hassani20contrastive} of workers and items to minimize the worker-reliability weighted normed difference between workers' labelings and the task representations. However, since the model does not generate edge predictions, there is not a clear way to do cross-validation to select the necessary hyperparameters.

CrowdFM \cite{liu26towards} takes a separate approach by generating a large amount of synthetic data and using attention-based message passing to learn representations that are then decoded into ground-truth label estimates. The model is trained through supervised learning. However, such an approach may build in potentially false inductive biases into the model. 
Additionally, unless one wishes to do inference on a large number of crowdsourcing datasets, training such a model can be prohibitively expensive.

\subsection{Training neural networks with noisy labels}
When task-level information (e.g., images) is available, DNNs can be trained using noisy labels to make task-specific estimates. Importantly, however, these models are unable to make use of noisy labels at test time.

TAIDTM \cite{li2024transferring} refines the learned representations of confusion matrices of workers using a GCN in conjunction with a task-dependent DNN representation to model instance-dependent confusion matrices and estimate ground-truth labels.

IDNT \cite{guo2023} learns instance-dependent worker confusion matrices by using a Bayesian generalized linear mixed effects model in which the worker-specific and task ground-truth-specific effects are modeled with DNNs and the model is trained using anchor points \cite{liu15classification,patrini17making,xia19anchor}.

%% file: chapters/method.tex
\section{Dual-Primate Graph VAEs}
To overcome the need to use 
simple transformations to model the relationship between workers and tasks, we use graph attention networks (GATs) \cite{velickovic18graph}, which enable us to learn representations of workers and tasks, which are represented by nodes on the crowdsourcing data graph. Given a representation of any given task and worker, a simple expression parameterizes the interaction between task and worker and predicts the distribution over task labelelings for that worker, thus specifying the decoder model, which reconstructs the crowdsourcing data from an input of latent variables representing the tasks, including their ground-truth labels.

While node message passing is done by the decoder to specify a distribution over edges, the encoder infers the distribution over task latent variables from learning representations of the edge data, as in \cite{monti18dual}, using node message passing from the dual of the crowdsourcing graph. We thus call our approach the Dual-Primal Graph VAE (DPGVAE). Once trained, our model's ground-truth label estimates are read off directly from the task label encoding distributions.
\input{chapters/figs/high_level}

\input{chapters/figs/primal-dual}

\subsection{Generative model}
Consider an $L$-ary crowdsourcing dataset with $M$ workers and $N$ tasks. Given ground-truth task labels $t\in\{1,...,L\}^N$ and task-specific latent variables $R\in\mathbb{R}^{N\times D_r}$, the generative model outputs a distribution over edges:
\begin{equation}
    E|J \sim P(\cdot;\text{GNN}_\theta(J)).
\end{equation}
The worker and task embeddings are initialized by $h^{(0)}_m = W^{(\text{embed})}_\text{worker}\text{one-hot}(m,M)$ and $j^{(0)}_n = [W^{(\text{embed})}_\text{label} \text{one-hot}(t) || R].$ Updates are given by
\begin{equation*}
    h^{(c+1)}_m = f\left(\lambda_\text{self}W_\text{self}h^{(c)}_m + \lambda_\text{neigh}\sum_{n\in\mathcal{N}(m)}\alpha_{mn}W_\text{neigh}j_n^{(c)}\right),
\end{equation*}
where $f$ is a nonlinearity such as ELU \cite{clevert16fast} and attention weights are computed by
\begin{gather*}
    \alpha_{mn} =  \frac{\exp(e_{mn})}{\sum_{n\in\mathcal{N}(m)}\exp(e_{mn})},\\
    e_{mn} = a(h^{(c)}_m,j^{(c)}_n),
\end{gather*}
where $a$ is an attention mechanism. We follow \cite{velickovic18graph} and set 
\begin{equation}
    a(x,y;W,b) =\text{LeakyReLU}(b^T[Wx || Wy]).\label{eqn:attn}
\end{equation}
Message passing is done over $K$ different attention mechanisms using multi-head attention:
\begin{equation*}
    h^{(c+1)}_m = \|_{k=1}^K f\left(\lambda^{(c,k)}_\text{self}W^{(c,k)}_\text{self}h^{(c)}_m + \lambda^{(c,k)}_\text{neigh}\sum_{n\in\mathcal{N}(m)}\alpha^{(c,k)}_{mn}W^{(c,k)}_\text{neigh}j_n^{(c)}\right).
\end{equation*}
Following \cite{velickovic18graph}, to learn robust representations we can apply Dropout \cite{srivastava14dropout} on the embeddings as well as the adjacency list to learn robust relationships between neighbors.

Updates for task embeddings $J$ are done in an analogous way. After $C$ layers the GAT outputs the final node embeddings which determine the edge predictive distributions by
    \begin{equation}
    P(E[(m, n)]) =\text{softmax}(({h_m^{(C)}}^TW^{(E)}_l j_n^{(C)})_{l=1}^L),
    \label{eqn:generative}
\end{equation}
where $\{W^{(E)}_l\}_{l=1}^L$ are learnable weight matrices.

For $C=0$, we recover a model that resembles a basic form of collaborative filtering: peer and task parameters are learned without paying attention to the graph structure.
\subsection{Inference model}
Given the edge information $E$, the inference model outputs a distribution over ground-truth label values and task parameters:
\begin{equation}
    J|E\sim P(\cdot;GNN_\phi(E)).
\end{equation}
The edge embeddings are initialized by $g^{(0)}_{(m,n)} = W^{(\text{embed})}_\text{edge}\text{one-hot}(E[(m,n)],L).$

Each edge embedding $g^{(l)}_{(m,n)}$ has two sets of neighbors: the set of tasks worker $m$ has labeled: $\{g^{(l)}_{m,n'}\}_{n'\neq n}$ and the set of workers rating task $n$: $\{g^{(l)}_{m',n}\}_{m'\neq m}$. Specifying a separate attentional mechanism for these two potentially different types of relationships, we have the update
\begin{gather*}
    g^{(c+1)}_{(m,n)}= \|_{k=1}^K f\Bigg(\lambda^{(c,k)}_\text{self}W^{(c,k)}_\text{self}g^{(c)}_{(m,n)} +
    \lambda^{(c,k)}_{n'\to n}\sum_{n'\in\mathcal{N}(m)\backslash m}\alpha^{(c,k)}_{nn'}W^{(c,k)}_\text{task}g^{(c)}_{(m,n')}+\nonumber\\ 
    \lambda^{(c,k)}_{m'\to m}\left.\sum_{m'\in\mathcal{N}(n)\backslash n}\alpha^{(c,k)}_{mm'}W^{(c,k)}_\text{worker}g^{(c)}_{(m',n)}\right).
\end{gather*}
The distribution over ground-truth label values is given by
\begin{equation*}
t_n\sim\text{Cat}\left(\sigma\left(W^{(t)}\left\langle g^{(C)}_{(m,n)}\right\rangle_{m\in\mathcal{N}(n)}\right)\right),
\end{equation*}
where 
\begin{equation*}
\left\langle g^{(C)}_{(m,n)}\right\rangle_{m\in\mathcal{N}(n)} \triangleq\sum_{m\in\mathcal{N}(n)}\tilde{\alpha}_{nm}g^{(C)}_{(m,n)},
\end{equation*}
and $\tilde{\alpha}_{mn}$ are attention weights determined through (a separately parameterized) self-attention. 
The distribution over task parameters is given by
\begin{equation}
R_n\sim\mathcal{N}\left(W^{(r,\mu)}\left\langle g^{(C)}_{(m,n)}\right\rangle_{m\in\mathcal{N}(n)},W^{(r,\log\sigma)}\left\langle g^{(C)}_{(m,n)}\right\rangle_{m\in\mathcal{N}(n)}\right).
\end{equation}
\subsection{Optimization}
Following the standard VAE approach \cite{kingma13auto}, we optimize the evidence lower bound (ELBO) given by
\begin{equation}
    \mathcal{L}(\theta,\phi) \triangleq \mathbb{E}_{q_\phi(J|E)}[\log p_\theta(E|J) + \log p_\theta(J) - \log q_\phi(J|E)].
\end{equation} 
The first term measures the quality of the edge reconstructions from the decoder, the second term penalizes the disparity between the task encodings and the prior distribution, and the last term encourages the complexity (entropy) of the task encodings. Generally, the expectation over the encoder's latent distribution is intractable so we optimize over a Monte Carlo estimate:
\begin{gather}
    \hat{\mathcal{L}}(\theta,\phi) = \frac{1}{S}\sum_{s=1}^S \log p_\theta(E|J^{(s)}) + \log p_\theta(J^{(s)}) - \log q_\phi(J^{(s)}|E),
    \label{eqn:mc_elbo}
\end{gather}
where $J^{(s)}\sim q_\phi(\cdot|E).$ 

As a prior over the ground-truth labels we use a tempered version of MV:
\begin{equation*}
    P(t_n=l|\{E[(m,n)]\}_{m\in\mathcal{N}(n)})\propto \left(\frac{1}{\mathcal{N}(n)}\sum_{m\in\mathcal{N}(n)}\mathbb{I}(E[(m,n)] = l) + \delta\right)^\beta,
\end{equation*}
where $\delta$ is an offset ensuring that potential ground-truth labelings without any votes retain some prior probability, and $\beta$ is an inverse-temperature parameter that determines the strength of the MV estimate as the prior. For the non-label task latent variables, we set $p(R_n)=\mathcal{N}(R_n;0,I_{D_r}).$ 

Since the samples $J^{(s)}$ in Equation (\ref{eqn:mc_elbo}) contain discrete valued components, the chain rule cannot be used to estimate the gradient of the parameters controlling the distribution over $t$ in the recognition model. For gradient estimation through discrete-valued sampling we use Reinmax \cite{liu23bridging}, which utilizes Heun’s method \cite{ascher98computer} and achieves second-order estimation accuracy with just gradient information.
\subsubsection{Invariances and subsampling}
The GNN architecture makes the VAE invariant to permuting the task, worker, and/or label indices. The GAT architecture enables subsampling neighbors without needing to rescale the subsampled data since the attention weights already normalize the embeddings. To augment each dataset, we resample tasks with replacement $N$ times. To help resampled tasks contain different information, edges are randomly masked with some probability.
\subsubsection{Computational Complexity}
\label{sec:complexity}
Memory for using the full graph and its dual is $\mathcal{O}(D_e\max(M\max_m\mathcal{N}(m)^2,N\max_n\mathcal{N}(n)^2)).$ From Equation \ref{eqn:attn}, computational complexity scales in the same way. However, it is easy to prune the graph such that $\max_m\mathcal{N}(m)$ and $\max_n\mathcal{N}(n)$ are bounded, thus enabling linear scaling. Dropout also reduces memory costs, and in our experiments we did not need to employ any pruning or Dropout to the crowdsourcing benchmarks in Section \ref{sec:crowd-expt} for the memory request to fit in our GPU. For our experiments augmenting crowdsourcing graphs with $10,000$ tasks in Section \ref{sec:aug-expt}, a Dropout rate of $0.9$ on the task-linked neighbors of the dual was sufficient to cut down enough on memory requirements.

Recent work \cite{jain25subsampling} has established theoretical guarantees on GNNs generalizing over subsamples of graphs. Using these techniques to scale our approach to unsupervised learning on larger graphs that can occur in other domains is left for future work.
\input{chapters/figs/augmented_graph}
\subsection{Posterior Collapse and Disentanglement}
\label{sec:collapse}
In infamous issue with VAEs is posterior collapse \cite{lucas19dont,he19lagging,wang21posterior}, in which the latent encodings are ignored by the decoder model, which is powerful enough to still reconstruct the data. In our model, posterior collapse would mean that the task label encodings would revert to their MV prior and the model's estimate would be identical to MV. From Equation \ref{eqn:generative}, we can put a simple restriction on the embedding dimensionality of the decoder to ensure the decoder cannot reconstruct the input data with arbitrary precision without paying attention to the latent encodings.
\begin{prop}
The conditional likelihood of the decoder in Eqn. \ref{eqn:generative} cannot be maximized under an arbitrary input  $J=[T || R]$ for $D_e< L\max_n|\mathcal{N}(n)|.$
\label{thm:prop}
\end{prop}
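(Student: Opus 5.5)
We read the statement as a rank (dimension-counting) claim about the bilinear readout in Eqn.~\ref{eqn:generative}. Fix a task $n$ and stack the worker embeddings of its neighbours as $H_n=[h^{(C)}_m]_{m\in\mathcal{N}(n)}\in\mathbb{R}^{|\mathcal{N}(n)|\times D_e}$. The logits the decoder assigns to the edges at task $n$ are
\begin{equation*}
Z_n=\bigl(H_n W^{(E)}_l j^{(C)}_n\bigr)_{l=1}^L\in\mathbb{R}^{|\mathcal{N}(n)|\times L},
\end{equation*}
and their columns can be concatenated into the vector $(I_L\otimes H_n)\,v_n$ with $v_n=(W^{(E)}_l j^{(C)}_n)_{l=1}^L\in\mathbb{R}^{LD_e}$. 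The plan is to show that, as the latent input $J=[T\|R]$ ranges arbitrarily while $\theta$ is held fixed, the achievable logit configurations at a task lie in a space too small to fit every labeling pattern near-perfectly. Consequently, some input $J$ forces the likelihood strictly below its supremum, and that supremum requires the decoder to depend on $J$.

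First, we interpret ``maximized'' as reconstructing the observed edges with arbitrary precision. For a fixed labeling $\{E[(m,n)]\}_{m\in\mathcal{N}(n)}$ this requires the logit matrix $Z_n$ to approach the vertex pattern: each row must have its observed label's logit dominate the others by margins going to infinity. Second, we count dimensions. As $J$ varies, the per-task signal enters only through $j^{(C)}_n\in\mathbb{R}^{D_e}$, so the realizable $Z_n$ lie in the image of a map out of a $D_e$-dimensional space. The full set of logit configurations for the $|\mathcal{N}(n)|$ edges, one $L$-way choice per edge, is $L|\mathcal{N}(n)|$-dimensional (up to the softmax shift). When $D_e<L\max_n|\mathcal{N}(n)|$, we take $n$ to be the task attaining the maximum degree. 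The realizable logit matrices then occupy a linear subspace of dimension at most $D_e$ in the $L|\mathcal{N}(n)|$-dimensional logit space.

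Third, we convert this subspace restriction into a failure to fit. A proper subspace cannot contain directions realizing all $L^{|\mathcal{N}(n)|}$ sign/argmax patterns with diverging margins. Indeed, there is a nonzero linear functional vanishing on the subspace. Choosing the labeling whose one-hot pattern has positive inner product with that functional (a Farkas/Radon-type argument) yields an edge configuration whose log-likelihood stays bounded away from $0$ for every $J$. Hence for that dataset the likelihood supremum cannot be attained independently of the input: no single decoder reconstructs arbitrary inputs perfectly, so the decoder must use the latent $T$ to do well.

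The main obstacle is the third step. It must be shown that a low-dimensional family of logit matrices, including the softmax shift invariance and the freedom to scale, genuinely misses some argmax pattern, rather than merely having measure zero. We would try to handle this by exhibiting an explicit separating functional on the row-wise one-hot patterns. A further worry is that $H_n$ itself depends on $J$ through message passing, so it is not fixed. To handle this, we would argue with $\theta$ fixed and count the bilinear degrees of freedom per edge ($D_e$ from $h_m$ against $L$ readouts). We would also state explicitly that the claim concerns this rank bound, since the proposition is informal on this point.
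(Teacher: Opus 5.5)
Your route is the paper's own. You freeze $h$ and $W$, pass to the maximum-degree task $n^*$, and compare the $D_e$ free coordinates of $j_{n^*}$ with the $L|\mathcal{N}(n^*)|$ logit conditions they must control. The paper simply asserts that these strict linear conditions are almost surely infeasible once $D_e<L|\mathcal{N}(n^*)|$. Your third step tries to prove exactly that assertion, and it cannot be completed because the assertion is false as stated.

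Write $K=|\mathcal{N}(n)|$ and let $V$ be the space of realizable logit matrices. Let $C_y=\{Z:\ Z_{m,y_m}>Z_{m,l} \text{ for all } m \text{ and } l\neq y_m\}$ be the open cone of logit matrices that fit the labeling $y$; rescaling inside $V$ then makes the margins diverge.

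First, the count in your second step mishandles the softmax shift you mention. The fit depends only on the differences $z_{m,l}-z_{m,L}$, which span an $(L-1)K$-dimensional space, not an $LK$-dimensional one. For generic $h,W$ with $D_e\ge (L-1)K$, the map $j\mapsto\bigl(h_m^\top(W_l-W_L)j\bigr)_{m,\,l<L}$ is onto. To see this, take $h_m=e_m$, $W_L=0$ and $h_m^\top W_l=e_{(m-1)(L-1)+l}^\top$ for $l<L$: the relevant minor is then the identity, so it is not identically zero. Hence every labeling at every task is fit to arbitrary precision although $D_e<LK$. The smallest case is $L=2$, $K=1$, $D_e=1$, where the single logit difference $h^\top(W_1-W_2)j$ is almost surely a nonzero multiple of $j$.

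Second, a subspace can meet all $L^K$ cones even when $D_e<(L-1)K$. Take $K=1$, $D_e=2$, $L\ge 4$ and $W_l^\top h=(\cos(2\pi l/L),\sin(2\pi l/L))$. Then $j=W_y^\top h$ makes $y$ the unique argmax for every $y$. This is an open condition on $(h,W)$, so it has positive probability at random initialization, and block-diagonal copies do the same for $K$ edges with $D_e=2K$. In the case $L=2$, $K=1$, $D_e=1$, even the paper's sign system $h^\top W_1j>0>h^\top W_2j$ is feasible with probability $1/2$ under a symmetric initialization.

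The mechanism you propose would not produce the separation in any case. A nonzero $\Phi$ annihilating $V$ certifies $V\cap C_y=\emptyset$ only if $\langle\Phi,\cdot\rangle$ has a constant strict sign on all of $C_y$. Now $C_y$ is invariant under adding a constant to any row, and the differences $Z_{m,l}-Z_{m,y_m}$, $l\neq y_m$, range independently over the negative reals. So a constant sign forces every row of $\Phi$ to sum to zero, and all off-label entries $\Phi_{m,l}$, $l\neq y_m$, to share one sign without all vanishing. Positivity of $\langle\Phi,Y\rangle$ at the one-hot point gives none of this. When $V$ is complementary to the row-shift subspace, no nonzero $\Phi\perp V$ is even shift-invariant.

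What an argument of this kind can honestly deliver is a count rather than a dimension comparison. The labelings realizable at task $n$ inject into the regions of the central arrangement of the $K\binom{L}{2}$ hyperplanes $\{j:\ h_m^\top(W_l-W_{l'})j=0\}$ in $\mathbb{R}^{D_e}$. There are at most $2\sum_{i=0}^{D_e-1}\binom{K\binom{L}{2}-1}{i}$ such regions, so some labeling is unfittable, for every $h$ and $W$, whenever this number is below $L^K$. For $L=2$ this is exactly the condition $D_e<|\mathcal{N}(n)|$. So the correct binary threshold is $\max_n|\mathcal{N}(n)|=(L-1)\max_n|\mathcal{N}(n)|$, not $2\max_n|\mathcal{N}(n)|$. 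For $L\ge 3$ the guaranteed range is in general smaller than $(L-1)K$.

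Finally, fixing $\theta$ does not freeze $h^{(C)}_m$, which depends on $J$ through message passing. So your second step, like the paper, has to treat the worker embeddings as fixed by fiat.
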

We prove Proposition \ref{thm:prop} in Appendix \ref{sec:proof}. Proposition \ref{thm:prop} gives a condition under which our model is not susceptible to posterior collapse due to an overexpressive decoder.

However, in practice, we do not find this worst case condition to be a failure mode or even a practical factor in our model. In Figure \ref{fig:ablations}, we see that setting a fixed embedding dimensionality to $100$ does not greatly reduce the performance of the model, despite being substantially greater than $\min_\mathcal{D}L_\mathcal{D}\max_n|\mathcal{N}_\mathcal{D}(n)|,$ where $\mathcal{D}$ indexes the datasets used in our experiments (though we see that having the encoding dimension as a hyperparameter to select for in cross-validation improves performance, as expected). The reason behind this is that the structure in the data is reflected in the GAT architecture, and so the final worker and tasks embeddings are not treated simply as free parameters, which may be the case in models that are susceptible to overparameterization such as LAA. 

A related phenomenon is entanglement of the latent variables \cite{higgins17beta,burgess17understanding,mathieu19disentangling}. If the label and task-specific latents both partially encode the ground-truth labels, our model's estimate, which is given by the label encoding latents, may be inaccurate. In Figure \ref{fig:mi}, we show traces of Monte Carlo estimated mutual information between $t$ and $r$ as a function of training epoch. For only a couple of datasets does the estimated mutual information increase beyond negligible levels, and in those datasets the mutual information eventually reduces back to negligible levels over the course of training. To see if these transient entanglements inhibited the final encodings, we tried training our model with a penalty on the sample mutual information but found that the resulting estimates were slightly worse.

One possible explanation for the phenomenon of automatically disentangled encodings is due to the hetergeneous structure of the joint latent space, where $t_n$ is a categorical random variable whose probability distribution consists of $L$ point masses, and $R_n$ is a Gaussian distribution. Studies demonstrating entanglement in VAEs generally use a joint Gaussian encoding space. It has also been shown that vector-quantized latent encodings can rescue VAEs from posterior collapse \cite{oord17neural,razavi19generating}.
\subsection{Integrating Disparate Sources of Information}
In addition to doing inference on standard bipartite crowdsourcing graphs, our method generalizes to a large family of graphs in which additional multiple sources of information can be easily included. For instance, in Section \ref{sec:aug-expt}, we first train neural networks using noisy data and use the learned task-specific representations as an extra set of edges connecting to the task nodes, shown in Figure \ref{fig:augmented}. Our method thus generalizes beyond the standard crowdsourcing context in which the data that can be utilized for inference is restricted to a bipartite worker-task graph.

%% file: chapters/figs/high_level.tex
  \begin{figure}
  \centering
  \scalebox{1.0}{
    \begin{tikzpicture}[>=Stealth, scale=2,tips=proper]
    \node[draw,ellipse,align=center] (a) at (0,0) {Task-Worker\\Edge Data};
    \node[draw,align=center] (b) at (1.8,0) {Dual GAT\\(encoder)};
    \node[draw,ellipse] (c) at (4,0) {Edge Embeddings};
    \node[draw=none] (a1) at (4.6,-.45) {Self-Attention};
    \node[draw,ellipse,align=center] (d) at (4,-1) {Task Encoding\\Distributions};
    \node[draw,align=center] (e) at (1.8,-1) {Primal GAT\\(decoder)};
    \node[draw,ellipse,align=center] (f) at (0,-1) {Task-Worker\\Edge Predictions};
    \node[draw=none,align=center] (g) at (2.8,-.75) {Sample Task\\Encodings};
    \draw [->] (a) edge (b.-180);
    \draw [->] (b) edge (c.-180);
    \draw [->] (c.-90) edge (d.90);
    \draw [->] (d.180) edge (e);
    \draw [->] (e.180) edge (f);
  \end{tikzpicture}}
  \caption{High level description of our proposed method. Crowdsourced data in the form of worker-task edge labels is input to an edge-encoding (dual) GAT. Edge embeddings partitioned by task are then pooled to produce task node encoding distributions from which task encodings are sampled and input to the primal GAT which does node-level message passing and outputs a predictive distribution over each task-worker labeling. Once our autoencoder is trained, the task encoding distributions serve as the ground-truth label estimates.}
\end{figure}

%% file: chapters/figs/primal-dual.tex
\begin{figure}
  \centering
\scalebox{1.0}{
    \begin{tikzpicture}[tips=proper]
    \node[ellipse,draw] (m1) at (0,3) {1};
    \node[ellipse,draw] (m2) at (0,1.5) {2};
    \node[ellipse,draw] (m3) at (0,0) {3};
    \node[ellipse,draw] (n1) at (2,3) {1};
    \node[ellipse,draw] (n2) at (2,2) {2};
    \node[ellipse,draw] (n3) at (2,1) {3};
    \node[ellipse,draw] (n4) at (2,0) {4};
    \path (m1) edge (n1);
    \path (m1) edge (n4);
    \path (m2) edge (n1);
    \path (m2) edge (n2);
    \path (m2) edge (n3);
    \path (m3) edge (n4);

    \draw [red,->] (n1.190) edge (m1.-10);
    \draw [red,->] (n4.135) edge (m1.290);
    \draw [red,->, loop left] (m1) edge (m1);

    \node [fit=(m1) (m3),label=above:$H$] {};
    \node [fit=(n1) (n4),label=above:$J$] {};

    \node[ellipse,draw] (node11) at (5,3) {(1,1)};
    \node[ellipse,draw] (node21) at (5,1.5) {(2,1)};
    \node[ellipse,draw] (node22) at (7,1.5) {(2,2)};
    \node[ellipse,draw] (node23) at (9,1.5) {(2,3)};
    \node[ellipse,draw] (node14) at (11,3) {(1,4)};
    \node[ellipse,draw] (node34) at (11,0) {(3,4)};

    \path (node11) edge (node21);
    \path (node11) edge (node14);
    \path (node21) edge (node22);
    \path (node22) edge (node23);
    \path (node21) edge[bend right=30] (node23);
    \path (node14) edge (node34);

    \draw [red,->, loop left] (node11) edge (node11);
    \draw [red,->] (node21.80) edge (node11.280);
    \draw [red,->] (node14.190) edge (node11.-10);
  \end{tikzpicture}}
\caption{\emph{Left:} Primal representation of crowdsourcing data. Worker nodes and task nodes are labeled above with $H$ and $J$ representing the matrices of worker and task embeddings, respectively. An undirected edge is drawn between worker node $m$ and task node $n$ if worker $m$ labeled task $n$. Red arrows indicate messages worker node $1$ receives (from its neighbors and itself). \emph{Right:} Dual of the crowdsourcing graph. Edges are represented as nodes and nodes are neighbors in the dual graph if the corresponding edges in the primal belong to the same (primal) node. Red arrows indicates messages that edge node $(1,1)$ receives.}
\label{fig:primal-dual}
\end{figure}

%% file: chapters/figs/augmented_graph.tex
\begin{wrapfigure}[16]{l}{.5\textwidth}
  \begin{center}
\scalebox{1.0}{
    \begin{tikzpicture}
    \node[ellipse,draw] (m1) at (0,3) {1};
    \node[ellipse,draw] (m2) at (0,1.5) {2};
    \node[ellipse,draw] (m3) at (0,0) {3};
    \node[ellipse,draw] (n1) at (2,3) {1};
    \node[ellipse,draw] (n2) at (2,2) {2};
    \node[ellipse,draw] (n3) at (2,1) {3};
    \node[ellipse,draw] (n4) at (2,0) {4};

    \node[ellipse,draw] (i1) at (4,1.5) {1};
    
    \path (m1) edge (n1);
    \path (m1) edge (n4);
    \path (m2) edge (n1);
    \path (m2) edge (n2);
    \path (m2) edge (n3);
    \path (m3) edge (n4);

    \path (i1) edge (n1);
    \path (i1) edge (n2);
    \path (i1) edge (n3);
    \path (i1) edge (n4);

    \node [fit=(m1) (m3),label=above:$H$] {};
    \node [fit=(n1) (n4),label=above:$J$] {};
    \node [fit=(i1) (i1),label=above:$I$] {};
\end{tikzpicture}}
\caption{Crowdsourcing graph in Figure \ref{fig:primal-dual} augmented with another source of information, e.g., a DNN image classifier's bottleneck layer representation of each image.}
\label{fig:augmented}
\end{center}
\end{wrapfigure}

%% file: chapters/experiments.tex
\section{Experiments}
\label{sec:experiments}

After each training epoch, we calculated the reconstruction loss on a held out part of the data (10\% of the dataset) as a validation set. After the specified number of training epochs, the model with the lowest validation reconstruction loss was used as the final model. The argmax of the task label latent encoding distributions resulting from performing inference on the full dataset were used as the final ground-truth label estimates. Accuracies are shown as mean $\pm$ standard deviation over five random seeds, using random tie-breaks over MV estimates. 


For all experiments, the MV prior tempering hyperparameters $\delta$ and $\beta$ were set to be .01 and 1.0, respectively, and we used $C=2$ layers in our GATs and set $D_r=1$ in all non-ablation experiments. We used Adam \cite{kingma15adam} with a cosine-annealed learning rate starting at $5\times 10^{-4}$. In the crowdsourcing benchmark experiments in Section \ref{sec:crowd-expt}, we performed cross-validation over hyperparameters: embedding Dropout $\{0, .2, .5\}$, neighbor Dropout: $\{0,.2,.5\}$, embedding dimensionality $\{20,50,100\}$, number of attention heads $\{5,10\}$ number of epochs $\{1000,5000\}$. For our experiments in Section \ref{sec:aug-expt}, we did not do cross-validation over the hyperparameters and instead fixed the embedding Dropout to .5, the primal GAT dropout to .5, the dual GAT dropout to .9, the embedding dimensionality to 30, the number of attention heads to 5, and the number of training epochs to 5000.
\subsection{Crowdsourcing benchmarks}
\label{sec:crowd-expt}
We first compared our proposed method with competitors on crowdsourcing benchmarks. Datasets include \texttt{CF} \cite{josephy14workshops}: sentiment classification on tweets about the weather, \texttt{face} \cite{mozafari14scaling}: facial expression classification, \texttt{dog} \cite{welinder10multidimensional}: labeling of different dog breeds, \texttt{zhang14spectral} tweet sentiment classification for a company, \texttt{prod} \cite{kraska12crowder}: determining whether two descriptions are for the same product, \texttt{adult} \cite{mason16conducting}: age-restriction appropriateness of a website given its url, \texttt{RTE} \cite{snow08cheap}: textual entailment classification, \texttt{LabelMe} \cite{rodrigues2018} image scene classification. LAA-B, LAA-O, and LAA-L refer to variants of LAA detailed in \cite{yin17aggregating} that correspond to models with $D_r=0,1,2,$ respectively.

In Table \ref{tab:crowdsourcing}, we show our model performs competitively to superior to the competing models. Overall, we find its ranking average over all datasets to be the best.
\input{chapters/tables/crowdsourcing}
\subsubsection{Ablations}
We then performed ablations on our model to gain further insights into which parts of our model were important to its success. We show the accuracy of the ablated models averaged over all datasets in Table \ref{tab:crowdsourcing} in Figure \ref{fig:ablations}.

Our ablations show that while cross-validation helps (as expected given the heterogeneity of datasets), there is no catastrophic loss if a given element is fixed. 
The most notable decrease in performance comes from $C=0$, which corresponds to not using the GAT architecture to encode or decode the latent variables and instead use affine transformations to map from latent space to embedding space in the encoder and vice versa in the decoder. There is little to no performance drop increasing from $C=2$ layers to $C=3$, demonstrating again that the GAT architecture is robust to over-parameterization in this domain. 

Resampling and Dropout are shown to be beneficial to performance, but removing them does not incur catastrophic loss, showing the robustness of our method. 
%
Fixing the embedding dimensionality to the highest among the hyperparameters, $D_e=100$ also incurs only a slight drop in performance, suggesting that it is fairly difficult to overparameterize the model when the encoding and decoding are done by GATs (additionally, resampling and Dropout can help regularize the model).

Setting $D_r=0$ decreases the performance of the model more substantially. Without this extra latent encoding information, it may be difficult for the model to distinguish among tasks with the same inferred ground-truth label. For example, in a dataset in which each worker labels each task, embeddings among tasks with the same task label latent variable will be identical.
\input{chapters/figs/ablations}
\subsection{Training DNNs with noisy labels and augmenting the crowdsourcing graph with their representations}
\label{sec:aug-expt}
We then wanted to see if our model could do a more generic form of crowdsourcing by integrating labeling information from workers with representations from neural networks, illustrated in Figure \ref{fig:augmented}, as DNNs themselves cannot make use of noisy labels at test time, nor can previously proposed crowdsourcing methods make use of neural network representations.
\input{chapters/tables/noisy_labels}
\subsubsection{Simulated labelings}
We first generated simulated labelings of three different noise levels: low, mid, and high with an average labeling error rate of 20\%, 35\%, and 50\%, respectively, for MNIST \cite{deng12mnist} and CIFAR10 \cite{krizhevsky09learning} following the procedure outlined in \cite{guo2023}. Each task received 5 labelings out of the pool of 10 simulated workers. We then trained IDNT \cite{guo2023} and TAIDTM \cite{li2024transferring} classifiers from the noisy labels using their official implementations. For MNIST, we used Lenet-5 \cite{lecun98gradient} and ResNet-18 \cite{he16deep} for CIFAR10.

Once the models are trained, we take the representations before the last fully connected and softmax layers (with dimensionalities of 84 and 512 for Lenet-5 and ResNet-18, respectively) and use them as the initial embeddings for the edges between the task nodes and the $I$ node in Figure \ref{fig:augmented}. We otherwise train our model in the same way using a dual GAT as an encoder and a primal GAT as the decoder with the primal graph as represented in Figure \ref{fig:augmented}.

In Table \ref{tab:noisy_labels}, we show that by using the augmented graph and combining the information in the noisy labelings with the image information encoded by the DNNs (referred to as $\hat{x}$), we achieve a much higher performance over just using the trained DNN (IDNT or TAIDTM in Table \ref{tab:noisy_labels}) or just using our method with the noisy labelings (DPGVAE in Table \ref{tab:noisy_labels}). As a baseline, we also compare the performance gain by simply including the DNN's label estimate ($\hat{y}$ in Table \ref{tab:noisy_labels}) and find that this alone does not account for the substantial performance boost.
\subsubsection{Real world datasets}
We used the same training and evaluation procedure for two datasets with real world labelings. CIFAR10-N \cite{wei22learning} consists of three worker labelings for each of the 50000 images in CIFAR10's training set. LabelMe \cite{rodrigues17learning,torralba10labelme} consists of 1000 $256\times 265\times 3$ images labeled by an average of 2.5 workers per image. We randomly split each dataset into an 80/10/10 train/validation/test partition. For CIFAR10-N, the architecture was ResNet-18, and for LabelMe, as in \cite{guo2023}, each image was first passed through a pretrained VGG-16 \cite{simonyan15very} network followed by two trainable fully-connected layers (the first with 50\% Dropout) followed by a softmax classification layer. For the VGG-16 network, the representation given to DPGVAE was 128 dimensional.

In Table \ref{tab:noisy_labels} we see the same qualitative trends for the real world noisily labeled datasets as the simulated datasets, where DPGVAE done over the crowdsourcing graph agumented the DNN representations performs substantially better than the neural network models on their own or models using the crowdsourced labels on their own, with the more accurate DNN estimates $\hat{y}$ not sufficient to account for the performance boost.

%% file: chapters/tables/crowdsourcing.tex
\begin{table}[!h]
  \caption{Accuracy of DPGVAE and competing methods on crowdsourcing datasets.}
  \label{tab:crowdsourcing}
  \centering
  \resizebox{\columnwidth}{!}{
  \begin{tabular}{cccccccccc}
    \toprule
         & CF & face & dog & senti & prod & adult & RTE & LabelMe & avg rank \\
    \midrule
    MV & .880 $\pm$ .006  & .637 $\pm$ .004 & .822 $\pm$ .004  & .933 $\pm$ .002 & .897 $\pm$ .000 & .757 $\pm$ .004 & .899 $\pm$ .003 & .767 $\pm$ .005 & 5.4 \\
    iBCC \cite{kim12bayesian} & .883 $\pm$ .000  & .641 $\pm$ .003  & .831 $\pm$ .001 & .960 $\pm$ .001  & \textbf{.938 $\pm$ .000} & .745 $\pm$ .000 & .921 $\pm$ .001 & .764 $\pm$ .000  & 4.3 \\
    EBCC \cite{li19exploiting} & .883 $\pm$ .000  & .638 $\pm$ .004 & .840 $\pm$ .000 & \textbf{.961 $\pm$ .000} & .935 $\pm$ .000 & .748 $\pm$ .000 & \textbf{.931 $\pm$ .000} & .786 $\pm$ .000  & 2.9 \\
    GOVERN \cite{liu24graph} & .860 $\pm$ .054  & .653 $\pm$ .006 & .822 $\pm$ .001 & .957 $\pm$ .000 & .771 $\pm$ .106 & .610 $\pm$ .094 & \textbf{.931 $\pm$ .000} & .775 $\pm$ .003  & 4.9 \\
    LAA-B \cite{yin17aggregating} & .849 $\pm$ .002  & .618 $\pm$ .006 & .836 $\pm$ .002 & .948 $\pm$ .003 & .896 $\pm$ .000 & .756 $\pm$ .006 & .882 $\pm$ .005 & .758 $\pm$ .001  & 6.4 \\
    LAA-O \cite{yin17aggregating} & .811 $\pm$ .007  & .648 $\pm$ .003 & .810 $\pm$ .003 & .930 $\pm$ .006 & .862 $\pm$ .001 & .746 $\pm$ .001 & .881 $\pm$ .008 & .766 $\pm$ .001  & 7.0 \\
    LAA-L \cite{yin17aggregating} & .837 $\pm$ .012  & .646 $\pm$ .001 & .807 $\pm$ .006 & .923 $\pm$ .005 & .896 $\pm$ .000 & .757 $\pm$ .002 & .872 $\pm$ .007 & .766 $\pm$ .000  & 6.5 \\
    CrowdFM \cite{liu26towards} & .881 $\pm$ .000  & .641 $\pm$ .000 & .823 $\pm$ .000 & .882 $\pm$ .000 & .899 $\pm$ .000 & \textbf{.762 $\pm$ .000} & .915 $\pm$ .000 & .771 $\pm$ .000  & 4.6 \\
    \midrule
    DPGVAE & \textbf{.898 $\pm$ .002}  & \textbf{.658 $\pm$ .003} & \textbf{.842 $\pm$ .002} & .955 $\pm$ .003 & .928 $\pm$ .004 & .757 $\pm$ .003 & .925 $\pm$ .002 & \textbf{.789 $\pm$ .003}  & \textbf{2.0} \\
    \bottomrule
  \end{tabular}}
\end{table}

%% file: chapters/figs/ablations.tex
\begin{figure}
  \centering
\scalebox{.9}{\begin{tikzpicture}
\hspace{-1cm}
\begin{axis}[
    symbolic x coords={full model, $C=0$,$C=1$,$C=3$, no resampling, no dropout, $D_e=100$, $D_r=0$},width=17cm,height=5cm,
    xtick=data,ylabel={avg accuracy}]
    \addplot[ybar,fill=blue,bar width=.5cm] coordinates {
        (full model,.844)
        ($C=0$,.802)
        ($C=1$,.820)
        ($C=3$,.840)
        (no resampling, .831)
        (no dropout, .826)
        ($D_e=100$, .830)
        ($D_r=0$, .810)
    };
\end{axis}
\end{tikzpicture}}
  \caption{Performance of model ablations on the average performance accuracy on crowdsourcing benchmarks in Table \ref{tab:crowdsourcing}. $C$ represents the number of layers in the encoder and decoder GATs, $D_e$ is the fixed embedding dimensionality using in the GATs, and $D_r$ is the encoding dimensionality of the (non-label) task latent variables.}
  \label{fig:ablations}
\end{figure}

%% file: chapters/tables/noisy_labels.tex
\begin{table}
  \caption{Classification accuracy using noisy labels: accuracies of baseline methods compared with their augmentations in GraphVAE.}
  \label{tab:noisy_labels}
  \centering
  \resizebox{\columnwidth}{!}{
  \begin{tabular}{ccccccccc}
    \toprule
    & \multicolumn{3}{c}{MNIST} &  \multicolumn{3}{c}{CIFAR10} & CIFAR10-N & LabelMe \\
    \midrule
    label noise& low & mid & high & low & mid & high \\
    \midrule
    MV & .924 $\pm$ .001 & .766 $\pm$ .001 & .596 $\pm$ .002 & .933 $\pm$ .001 & .771 $\pm$ .000 & .605 $\pm$ .002 & .905 $\pm$ .003 & .767 $\pm$ .005\\
    DPGVAE & .927 $\pm$ .002 & .776 $\pm$ .003 & .614 $\pm$ .004 & .939 $\pm$ .002 & .797 $\pm$ .002 & .644 $\pm$ .003 & .912 $\pm$ .004 & .789 $\pm$ .003\\
    \midrule
    MV $+$ $\hat{y}_\text{IDNT}$ & .967 $\pm$ .002 & .868 $\pm$ .001 & .751 $\pm$ .002 & .970 $\pm$ .001 & .869 $\pm$ .001 & .743 $\pm$ .000 & .942 $\pm$ .001 & .820 $\pm$ .002\\
    DPGVAE $+$ $\hat{y}_\text{IDNT}$ & .972 $\pm$ .002 & .894 $\pm$ .002 & .785 $\pm$ .003 & .972 $\pm$ .001 & .885 $\pm$ .002 & .783 $\pm$ .002 & .946 $\pm$ .003  & .851 $\pm$ .002 \\
    \midrule
    IDNT \cite{guo2023} & .990 $\pm$ .001 & .988 $\pm$ .001 & .979 $\pm$ .002 & .880 $\pm$ .001 & .874 $\pm$ .001 & .870 $\pm$ .001 & .898 $\pm$ .003 & .850 $\pm$ .010\\
    DPGVAE $+$ $\hat{x}_\text{IDNT}$ & .995 $\pm$ .001 & .993 $\pm$ .003 & .987 $\pm$ .002 & .962 $\pm$ .002 & .927 $\pm$ .002 & .881 $\pm$ .003 & .948 $\pm$ .003  & .872 $\pm$ .013\\
    \midrule
    MV $+$ $\hat{y}_\text{TAIDTM}$ & .967 $\pm$ .000 & .873 $\pm$ .001 & .739 $\pm$ .001 & .968 $\pm$ .001 & .874 $\pm$ .000 & .748 $\pm$ .000 & .820 $\pm$ .001 & .814 $\pm$ .002\\
    DPGVAE $+$ $\hat{y}_\text{TAIDTM}$ & .971 $\pm$ .001 & .896 $\pm$ .002 & .773 $\pm$ .002 & .971 $\pm$ .002 & .893 $\pm$ .001 & .790 $\pm$ .001 & .831 $\pm$ .002 & .822 $\pm$ .004\\
    \midrule
    TAIDTM \cite{li2024transferring} & .993 $\pm$ .001 & .993 $\pm$ .002 & .962 $\pm$ .003 & .912 $\pm$ .002 & .909 $\pm$ .002 & .899 $\pm$ .001 & .900 $\pm$ .004 & .854 $\pm$ .011 \\
    DPGVAE $+$ $\hat{x}_\text{TAIDTM}$  & .998 $\pm$ .001 & .995 $\pm$ .002 & .970 $\pm$ .002 & .976 $\pm$ .002 & .946 $\pm$ .003 & .910 $\pm$ .002 & .943 $\pm$ .003 & .874 $\pm$ .010\\
    \bottomrule
  \end{tabular}}
\end{table}

%% file: chapters/discussion.tex
\section{Discussion}
In the first half of this paper, we have proposed a crowdsourcing model that combines unsupervised learning and representation learning to overcome shortcomings in previous crowdsourcing models using either only unsupervised learning or representation learning. In the second half of this paper, we have shown how our proposed model can naturally generalize to incorporate disparate sources of information relevant for noisy label aggregation. Namely, we have shown how our model can integrate both noisy labels and DNN image representations in order to make substantially more accurate predictions than when just using one source of information, demonstrating a form of human-machine complementarity \cite{steyvers22bayesian}.

%% file: chapters/identifiability_proof.tex
\subsection{Proof of Proposition 
\label{sec:proof}
\ref{thm:prop}}
Recall \textbf{Proposition \ref{thm:prop}}: The conditional likelihood of the decoder in Eqn. \ref{eqn:generative} cannot be maximized under an arbitrary input  $J=[T || R]$ if $D_e< L\max_n|\mathcal{N}(n)|.$
\begin{proof}
    For a set of labeled edges $\{E[(m,n)]\}$ the conditional likelihood is maximized when for a set of latent ground-truth labels $\{t_n\}_n$ the following conditions are met:
    \begin{equation*}
    \left.\left.
    \begin{aligned}
        h_m^TW_{l}j_n &> 0,\; l=t_n\\
        h_m^TW_{l'}j_n &< 0,\; l'\neq l
    \end{aligned}
    \right\}\forall m\in\mathcal{N}(n)\right\} n=1,...,N.
    \end{equation*}
For $n^*=\text{argmax}_n|\mathcal{N}(n)|$, we have a system of $L|\mathcal{N}(n^*)|$ strict inequalities linear in each of the $D_e$ elements of $j_n$. Almost surely wrt the distribution of $h$ and $W$ induced by random initialization (and any previous training steps), these constraints cannot be satisfied for any choice of $t_n$ for $D_e< L|\mathcal{N}(n^*)|.$
\end{proof}

%% file: chapters/figs/mi_fig.tex
\begin{figure}[!h]
  \begin{center}\includegraphics[width=.48\textwidth]{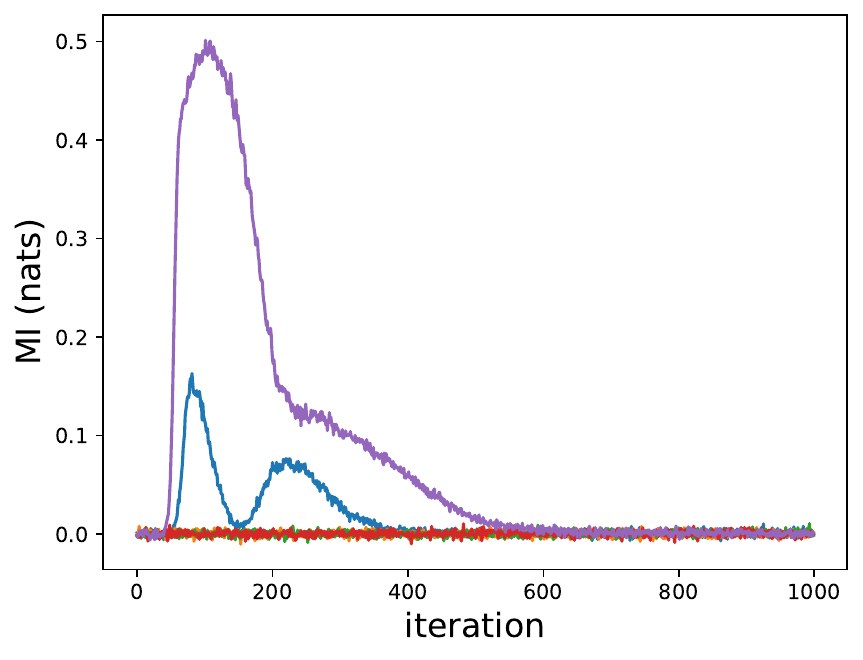}
  \caption{Example traces of sample estimates of $MI(t,r)$ to gauge disentanglement for all crowdsourcing benchmarks in Table \ref{tab:crowdsourcing} (datasets where $5000$ iterations are used are subsampled every $5$ time points). Only two datasets show transient substantial MI, and the model eventually eliminates any MI between $t$ and $r$ over the course of training.}
  \label{fig:mi}
  \end{center}
\end{figure}